\newcommand{\wrt}{w.\,r.\,t.\xspace}
\newcommand{\ie}{i.\,e.\xspace}
\newcommand{\eg}{e.\,g.\xspace}
\newcommand{\ONEMAX}{\textsc{OneMax}\xspace}
\newcommand{\OneMax}{\ONEMAX}
\newcommand{\R}{\mathds{R}}
\DeclareMathOperator{\Prob}{Pr}
\newcommand{\E}[1]{\mathord{E}\mathord{\left(#1\right)}}
\newenvironment{proofof}[1]{\begin{proof}[Proof of #1]}{\end{proof}}
\newtheorem{theorem}{Theorem}
\newtheorem{lemma}{Lemma}
\title{The Fitness Level Method with Tail Bounds}
\author{Carsten Witt\\
\small DTU Compute\\
\small Technical University of Denmark\\
\small 2800 Kgs.\ Lyngby\\
\small Denmark}
\begin{document}
\maketitle

\begin{abstract}
The fitness-level method, also called the method of $f$\nobreakdash-based partitions, 
is an intuitive and widely used technique for the running time analysis of randomized search 
heuristics. It was originally defined to prove upper and lower bounds on the expected running time. Recently, 
upper tail bounds were added to the technique; however, these tail bounds only 
apply to running times that are at least twice as large as the expectation.

We remove this restriction and supplement the fitness-level method with sharp tail bounds, including lower tails. 
As an exemplary application, we prove that the running time of randomized local search on \OneMax 
is sharply concentrated around $n\ln n-0.1159n$.
\end{abstract}

\section{Introduction}
The running time analysis of randomized search heuristics, including evolutionary algorithms, ant 
colony optimization and particle swarm optimization, is a vivid research area where 
many results have been obtained in the last 15 years. Different methods for the analysis 
were developed as the research area grew. For an overview of the state of the art 
in the area see the books by \cite{AugerDoerrBook}, \cite{NeumannWittBook} and 
\cite{JansenBook}.

The fitness-level method, also called 
the method of fitness-based partitions, is a classical and intuitive method 
for running time analysis, 
first formalized by \citet{WegenerICALP01}. It applies to the case 
that the total running time of a search heuristic 
can be represented as (or bounded by) a 
sum of geometrically distributed waiting times, where the waiting times 
account for the number of steps spent on certain levels of the search space. 
\citet{WegenerICALP01} presented both upper and lower bounds on the running time 
of randomized search heuristics using 
the fitness-level method. The lower bounds relied on the assumption that 
no level was allowed to be skipped. \cite{SudholtTEC13} 
significantly relaxed this assumption and presented a very general 
lower-bound version of the fitness-level 
method that allows levels to be skipped with some probability.

Only recently, the focus in running time analysis turned to tail bounds, also 
called concentration inequalities. \cite{ZLLHProbable} were the first to add 
tail bounds to the fitness-level method. Roughly speaking, they prove \wrt\ the running time~$T$ 
that $\Prob(T>2\E{T}+2\delta h)=e^{-\delta}$ holds, where  $h$ is the worst-case expected waiting 
time over all fitness levels and $\delta>0$ is arbitrary. An obvious open question  
was whether the factor~$2$ in front of the expected value could be ``removed'' 
from the tail bound, \ie, replaced with~$1$; \cite{ZLLHProbable} only remark 
that the factor~$2$ can be replaced with $1.883$.

In this article, we give a positive answer to this question and supplement the fitness-level 
method also with lower tail bounds. Roughly speaking, we prove in Section~\ref{sec:tailbounds} that 
$\Prob(T<\E{T}+\delta )\le e^{-\delta^2/(2s)}$ and 
$\Prob(T>\E{T}+\delta )\le e^{-\min\{\delta^2/(4s), \delta h/4\}}$, where $s$ is the sum of the squares 
of the waiting times over all fitness levels. We apply the technique to  a classical 
benchmark problem, more precisely to the running time analysis of randomized local search (RLS)
on \OneMax in 
Section~\ref{sec:rls}, and prove a very sharp concentration of the running time around 
$n\ln n-0.1159n$. We finish with  some conclusions
and a pointer to related work.

\section{New Tail Bounds for Fitness Levels}
\label{sec:tailbounds}

\cite{Whatisatight} on the internet discussed tail bounds for a special case of our problem, namely the coupon 
collector problem \citep[][Chapter~3.6]{MotwaniRaghavan}. Inspired by this discussion, we present our main result in 
Theorem~\ref{theo:geometric-concentration} below. It applies to 
the scenario that a random variable (\eg, a running time) is given as a sum of 
geometrically distributed independent random variables (\eg, waiting times on fitness levels).
A concrete application 
will be presented in Section~\ref{sec:rls}.

\begin{theorem}
\label{theo:geometric-concentration}
Let $X_i$, $1\le i\le n$, be independent random variables following  
the geometric distribution 
with success probability $p_i$, and let $X:=\sum_{i=1}^n X_i$. 
If $\sum_{i=1}^n (1/p_i^2) \le s < \infty$ then for any 
$\delta>0$ 
\[
\Prob(X < \E{X} - \delta) \le e^{-\frac{\delta^2}{2s}}.
\]
For $h:=\min\{p_i\mid i=1,\dots,n\}$, 
\[
\Prob(X > \E{X} + \delta) \le e^{-\frac{\delta}{4}\cdot \min\left\{\frac{\delta}{s}, h\right\}}.
\]
\end{theorem}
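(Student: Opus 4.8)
My plan is to prove the upper tail by the exponential-moment (Chernoff) method, reducing everything to a single clean estimate for the cumulant generating function of one centred geometric variable and then summing. First I would record that a geometric $X_i$ with success probability $p_i$ has $\E{X_i}=1/p_i$ and, for $\lambda<-\ln(1-p_i)$, moment generating function $\E{e^{\lambda X_i}}=p_ie^{\lambda}/(1-(1-p_i)e^{\lambda})$, whose logarithm $g_i(\lambda)$ satisfies $g_i(0)=0$ and $g_i'(\lambda)=(1-(1-p_i)e^{\lambda})^{-1}$. By independence $\ln\E{e^{\lambda X}}=\sum_i g_i(\lambda)$, so for every admissible $\lambda>0$
\[
\Prob(X>\E{X}+\delta)\le\exp\Bigl(\textstyle\sum_i\bigl(g_i(\lambda)-\lambda/p_i\bigr)-\lambda\delta\Bigr),
\]
and the whole argument reduces to bounding the centred cumulant $\tilde g_i(\lambda):=g_i(\lambda)-\lambda/p_i$ and then optimising over $\lambda$.

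The key lemma I would establish is the pointwise bound $\tilde g_i(\lambda)\le -\lambda/p_i-\ln(1-\lambda/p_i)$ on the interval $0\le\lambda<p_i$. The route is to first show $g_i'(\lambda)\le 1/(p_i-\lambda)$, i.e.\ $1-(1-p_i)e^{\lambda}\ge p_i-\lambda$, which is equivalent to the elementary inequality $e^{\lambda}\le 1+\lambda/(1-p_i)$. I expect \emph{this} elementary inequality to be the technical heart of the proof: the map $\lambda\mapsto e^{\lambda}-1-\lambda/(1-p_i)$ is convex, vanishes at $\lambda=0$ with negative derivative there, and is non-positive at $\lambda=p_i$ precisely because $p_i\le -\ln(1-p_i)$, so convexity forces it to stay non-positive throughout $[0,p_i]$ (and along the way this also confirms $1-(1-p_i)e^{\lambda}>0$, so $\lambda<p_i$ is admissible). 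Integrating $g_i'(\lambda)\le 1/(p_i-\lambda)$ from $0$ gives $g_i(\lambda)\le-\ln(1-\lambda/p_i)$ and hence the claimed bound on $\tilde g_i$. Finally I would relax this to the Bernstein shape
\[
\tilde g_i(\lambda)\le\frac{\lambda^2/(2p_i^2)}{1-\lambda/p_i},
\]
which follows from the term-by-term comparison $\sum_{k\ge2}x^k/k\le\sum_{k\ge2}x^k/2=\tfrac{x^2/2}{1-x}$ applied with $x=\lambda/p_i$.

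With the per-variable bound in hand the remainder is routine. Restricting to $0\le\lambda<h$, so that $\lambda<h\le p_i$ for every $i$, I replace each denominator $1-\lambda/p_i$ by the smaller common quantity $1-\lambda/h$ and use $\sum_i 1/p_i^2\le s$ to obtain $\sum_i\tilde g_i(\lambda)\le (s/2)\lambda^2/(1-\lambda/h)$. Substituting into the Chernoff bound yields $\Prob(X>\E{X}+\delta)\le\exp\!\bigl((s/2)\lambda^2/(1-\lambda/h)-\lambda\delta\bigr)$, and the standard Bernstein choice $\lambda=\delta/(s+\delta/h)$, which lies in $[0,h)$, gives $\Prob(X>\E{X}+\delta)\le\exp\!\bigl(-\delta^2/(2(s+\delta/h))\bigr)$. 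The last, cosmetic step weakens this to the stated form: since $s+\delta/h\le 2\max\{s,\delta/h\}$, the exponent is at least $\min\{\delta^2/(4s),\delta h/4\}=\tfrac{\delta}{4}\min\{\delta/s,h\}$, which is exactly the claimed bound. I would remark that the lower tail is obtained by the same scheme applied to $-X$: there the relevant transform $\E{e^{-\lambda X_i}}$ exists for \emph{all} $\lambda>0$, the analogous estimate is genuinely sub-Gaussian with no singularity constraining $\lambda$, and this is precisely why the lower tail carries no $h$-term.
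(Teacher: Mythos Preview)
Your argument is correct and, like the paper, proceeds via the exponential method, but the technical core is organised differently. The paper bounds the moment generating function directly through two elementary inequalities (its Lemma~1): $e^{x}/(1+x)\le e^{x^{2}/2}$ for the lower tail and $e^{-x}/(1-x)\le e^{x^{2}/(2-2x)}$ for the upper tail; after a first step using $e^{x}\ge 1+x$ these give $\E{e^{\pm tX_i}}\le e^{\mp t/p_i + \text{(quadratic)}}$. For the upper tail the paper then restricts to $t\le h/2$ so that $x^{2}/(2-2x)\le x^{2}$, obtains the quadratic exponent $-t\delta+t^{2}s$, and finishes with a case split on whether the unconstrained optimiser $t=\delta/(2s)$ is feasible. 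You instead bound the derivative $g_i'(\lambda)\le 1/(p_i-\lambda)$ on $[0,p_i)$ via the convexity argument for $e^{\lambda}\le 1+\lambda/(1-p_i)$, integrate to $g_i(\lambda)\le-\ln(1-\lambda/p_i)$, and pass to the Bernstein shape $\tilde g_i(\lambda)\le \tfrac{\lambda^{2}/(2p_i^{2})}{1-\lambda/p_i}$; summing and optimising yields the single intermediate bound $\exp\!\bigl(-\delta^{2}/(2(s+\delta/h))\bigr)$, which is strictly sharper than the stated theorem and is relaxed only cosmetically at the end. So your route buys a cleaner and tighter intermediate inequality without a case split, while the paper's route is a little more elementary (no integration, no cumulant viewpoint) and carries the lower tail out in full rather than by sketch. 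Two minor points: your derivative inequality needs $p_i<1$ to avoid dividing by $1-p_i$, but the degenerate case $p_i=1$ is trivial since then $\tilde g_i\equiv 0$; and for the lower tail your ``analogous estimate'' is the reversed derivative inequality $g_i'(u)\ge 1/(p_i-u)$ for $u\le 0$ (immediate from $e^{u}\ge 1+u\ge 1+u/(1-p_i)$), which after integration gives $\tilde g_i(-\lambda)\le \lambda/p_i-\ln(1+\lambda/p_i)\le \lambda^{2}/(2p_i^{2})$ and hence the stated sub-Gaussian bound.
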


For the proof, the following two simple inequalities will be used.

\begin{lemma}
\label{lem:inequality}
\mbox{}\\[-\bigskipamount]
\begin{enumerate}
\item For $x\ge 0$ it holds $\frac{e^x}{1+x}\le e^{x^2/2}$.
\item For $0\le x\le 1$ it holds $\frac{e^{-x}}{1-x} \le e^{x^2/(2-2x)}$.
\end{enumerate}
\end{lemma}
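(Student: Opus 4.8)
The plan is to establish both inequalities by taking logarithms and reducing each to the claim that an auxiliary function vanishes at the origin and is monotonically nondecreasing thereafter. For the first part, taking logarithms turns $\frac{e^x}{1+x}\le e^{x^2/2}$ into the equivalent statement $g(x):=\ln(1+x)+\frac{x^2}{2}-x\ge 0$ for $x\ge 0$. I would first note $g(0)=0$ and then differentiate. The crucial point is that the derivative simplifies cleanly: $g'(x)=\frac{1}{1+x}+x-1=\frac{x^2}{1+x}$, which is nonnegative on $[0,\infty)$. Hence $g$ is nondecreasing and stays at or above $g(0)=0$, which is exactly the claim.

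For the second part, I would proceed in the same way on the interval $[0,1)$; the endpoint $x=1$ is degenerate, with both sides tending to $+\infty$, and may be excluded or treated as a limit. Taking logarithms of $\frac{e^{-x}}{1-x}\le e^{x^2/(2-2x)}$ yields the equivalent inequality $h(x):=\frac{x^2}{2-2x}+x+\ln(1-x)\ge 0$. Again $h(0)=0$, and the content of the argument lies entirely in verifying that the derivative is manifestly nonnegative. After differentiating the three summands and placing everything over the common denominator $2(1-x)^2$, I expect the numerator to collapse to $x^2$, giving $h'(x)=\frac{x^2}{2(1-x)^2}\ge 0$ on $[0,1)$. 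Monotonicity together with $h(0)=0$ then finishes the proof.

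The only step requiring genuine care is the algebraic cancellation in the two derivatives: in each case several terms must combine exactly so that the numerator reduces to $x^2$, respectively so that the $\frac{1}{1+x}$ and constant terms merge into $\frac{x^2}{1+x}$. This is routine bookkeeping, but it must be carried out correctly, since the whole proof rests on the sign of $g'$ and $h'$ being evident after simplification. Beyond this I do not anticipate any real obstacle; both statements are elementary calculus facts whose proofs share an identical structure.
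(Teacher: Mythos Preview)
Your proposal is correct. Both derivative computations simplify exactly as you claim: $g'(x)=\frac{1}{1+x}+x-1=\frac{x^2}{1+x}$ and, after putting everything over $2(1-x)^2$, $h'(x)=\frac{x^2}{2(1-x)^2}$. The monotonicity argument from $g(0)=h(0)=0$ then goes through, and your handling of $x=1$ as a limiting case is appropriate.

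The paper, however, takes a genuinely different route: it works directly with power series rather than taking logarithms and differentiating. For part~1 it pairs consecutive terms of the expansion of $e^x$ to get $e^x\le(1+x)\sum_{i\ge 0}\frac{x^{2i}}{(2i)!}$, divides by $1+x$, and then uses $(2i)!\ge 2^i i!$ to recognise the remaining series as $e^{x^2/2}$. For part~2 it drops all negative terms except $-x$ from the expansion of $e^{-x}$ and compares termwise with the series for $e^{x^2/(2-2x)}$. Your calculus approach is more mechanical and arguably cleaner---once the derivatives collapse to $x^2$ over a positive denominator, nothing is left to check---while the paper's series argument avoids differentiation entirely but requires spotting the right groupings and the inequality $(2i)!\ge 2^i i!$. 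Both proofs are elementary; they simply reflect different styles.
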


\begin{proof}
We start with the first inequality. The series representation of the exponential 
function yields
\[
e^{x} \;=\; \sum_{i=0}^{\infty } \frac{x^i}{i!} \;\le\; \sum_{i=0}^{\infty } (1+x)\frac{x^{2i}}{{(2i)}!}
\]
since $x\ge 0$. Hence, 
\[
\frac{e^{x}}{1+x} \;\le\; \sum_{i=0}^{\infty} \frac{x^{2i}}{(2i)!} .
\]
Since $(2i)! \ge 2^i i!$, we get 
\[
 \frac{e^{x}}{1+x} \;\le\;  \sum_{i=0}^{\infty } \frac{x^{2i}}{2^i i!} \;=\; e^{x^2/2}.
\]

To prove the second inequality, we omit all negative terms except for $-x$ 
from the series representation of $e^{-x}$ to get
\[
\frac{e^{-x}}{1-x} \;\le\; \frac{1 - x + \sum_{i=1}^{\infty } \frac{x^{2i}}{(2i)!}}{1-x} 
\;=\; 1+ \sum_{i=1}^{\infty } \frac{x^{2i}}{(1-x)\cdot (2i)!}.
\]
For comparison,
\[
e^{x^2/(2-2x)} \;=\; 1 + \sum_{i=1}^{\infty } \frac{x^{2i}}{2^i (1-x)^i i!} ,
\]
which, as $x\le 1$, is clearly not less than our estimate for $e^{-x}/(1-x)$. 
\end{proof}

\begin{proofof}{Theorem~\ref{theo:geometric-concentration}}
Both the lower and upper tail are analyzed similarly, using the exponential method (see, \eg, the proof of the Chernoff bound in
\citealp[][Chapter~$3.6$]{MotwaniRaghavan}).
We start with the lower tail. Let $d:=\E{X}-\delta = \sum_{i=1}^n (1/p_i) - \delta$. Since for any $t\ge 0$ 
\[
X < d \iff  -X > -d 
\iff 
e^{-tX} > e^{-td} ,\]
Markov's inequality and the independence of the $X_i$ yield that
\[
\Prob(X<d) \le \frac{\E{e^{-tX}}}{e^{-td} } = e^{td} \cdot \prod_{i=1}^n \E{e^{-tX_i}}.
\]
Note that the last product involves the moment-generating functions (mgf's) of the~$X_i$. 
Given a geometrically distributed random variable~$Y$ with parameter~$p$, its moment-generating 
function at $r\in\R$ equals 
$\E{e^{rY}} = \frac{pe^r}{1-e^r(1-p)} = \frac{1}{1 - (1-e^{-r})/p}$
for $r<-\ln(1-p)$. We will only use 
negative values for~$r$, which guarantees existence of the mgf's used in the following. Hence,
\[
\Prob(X<d) 
\le e^{td}  \cdot  \prod_{i=1}^n \frac{1}{1-(1-e^t)/p_i} \le e^{td} \cdot\prod_{i=1}^n \frac{1}{1+t/p_i},
\]
where we have used $e^x\ge 1+x$ for $x\in\R$. Now, by writing the numerators as $e^{t/p_i}\cdot e^{-t/p_i}$, using 
 $\frac{e^x}{1+x} \le e^{x^2/2}$ for $x\ge 0$ (Lemma~\ref{lem:inequality}) and finally 
plugging in $d$, 
we get 
\[
\Prob(X<d) 
\le e^{td} \cdot \left(\prod_{i=1}^n e^{t^2/(2p_i^2)} e^{-t/p_i}\right) = e^{td} e^{(t^2/2)\sum_{i=1}^n (1/p_i)^{2}}  e^{-t\E{X}} 
\le e^{-t\delta + (t^2/2) s}.
\]
The last exponent is minimized for $t=\delta/s$, which yields
\[
\Prob(X<d) \le e^{-\frac{\delta^2 }{2s}}
\]
and proves the lower tail inequality.

For the upper tail, we redefine $d:=\E{X}+\delta$ and obtain
\[
\Prob(X>d) \le \frac{\E{e^{tX}}}{e^{td} } = e^{-td} \cdot \prod_{i=1}^n \E{e^{tX_i}}.
\]
Estimating the moment-generating functions similarly as above, we get 
\[
\Prob(X>d) 
\le e^{-td} \cdot \left(\prod_{i=1}^n \frac{e^{-t/p_i}}{1-t/p_i} \cdot e^{t/p_i}\right).
\]
Since now positive arguments are used for the moment-generating functions, we limit 
ourselves to $t\le \min\{p_i\mid i=1,\dots,n\}/2=h/2$ to ensure convergence. 
Using $\frac{e^{-x}}{1-x} \le e^{x^2/(2-2x)}$ for $0\le x\le 1$ (Lemma~\ref{lem:inequality}), 
we get
\[
\Prob(X>d) 
\le e^{-td} \cdot \left(\prod_{i=1}^n  e^{t^2/(p_i^2 (2-2t/p_i))} \cdot e^{t/p_i}\right) = 
 \left(\prod_{i=1}^n  e^{-t\delta + t^2/p_i^2 }\right) \le e^{-t\delta+t^2s},
\]
which is minimized for $t=\delta/(2s)$. If $\delta\le s h$, this choice satisfies $t\le h/2$. Then 
$-t\delta +t^2 s=-\delta^2/(4s)$ and we get
\[
\Prob(X>d) \le e^{-\frac{\delta^2 }{4s}}.
\]
Otherwise, \ie\ if $\delta>sh$, we set 
$t=h/2$ to obtain $-t\delta+t^2 s = -\delta h/2+s (h/2)^2 \le -\delta h/2 + \delta h/4 = -\delta h /4$. 
Then 
\[
\Prob(X>d) \le e^{-\frac{\delta h}{4}}.
\]
Joining the two cases in a minimum leads to the lower tail.
\end{proofof}

Based on Theorem~\ref{theo:geometric-concentration}, we formulate the fitness-level theorem with tail bounds
for general optimization algorithms~$\mathcal{A}$ instead of a specific randomized search 
heuristic (see also \citealp{SudholtTEC13}, who uses 
a similar approach).
\begin{theorem}[Fitness Levels with Tail Bounds]
\label{theo:fitnesstail}
Consider an algorithm~$\mathcal{A}$ maximizing some function~$f$ and a partition of the search space into non-empty sets $A_1,\dots,A_m$. Assume 
that the sets form an $f$-based partition, \ie, 
for $1\le i<j\le m$  and all $x\in A_i$, $y\in A_j$ it holds $f(x)<f(y)$. 
We say that $\mathcal{A}$  is in $A_i$ or on level~$i$ if the best search point created so far 
is in $A_i$. 

\begin{enumerate}
\item 
If $p_i$ is a lower bound 
on the probability that a step of~$\mathcal{A}$ leads from level~$i$ to some higher level, independently 
of previous steps, then 
the first hitting time of~$A_m$, starting from level~$k$, is at most
\begin{align*}
\sum_{i=k}^{m-1} \frac{1}{p_i} + \delta.
\end{align*}
with probability at least $1-e^{-\frac{\delta}{4}\cdot \min\{\frac{\delta}{s}, h\}}$, for any finite 
$s\ge \sum_{i=k}^{m-1} \frac{1}{p_i^2}$ 
and $h = \min\{p_i \mid i=k,\dots,m-1\}$. 

\item 

If $p_i$ is an upper bound 
on the probability that a step of~$\mathcal{A}$ leads from level~$i$ to level~$i+1$, independently 
of previous steps, and 
the algorithm cannot increase its level by more than~$1$,  
then 
the first hitting time of~$A_m$, starting from level~$k$, is at least
\begin{align*}
\sum_{i=k}^{m-1} \frac{1}{p_i} - \delta
\end{align*}
with probability at least 
$1-e^{-\frac{\delta^2}{2s}}$.
\end{enumerate}
\end{theorem}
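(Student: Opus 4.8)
The plan is to reduce both parts to Theorem~\ref{theo:geometric-concentration} by a stochastic-domination (coupling) argument that expresses the first hitting time as dominated by, respectively dominating, a sum of independent geometric waiting times. For each level $i\in\{k,\dots,m-1\}$ I would introduce an independent geometric random variable $G_i$ with success probability $p_i$ and set $X:=\sum_{i=k}^{m-1} G_i$. Then $\E{X}=\sum_{i=k}^{m-1}(1/p_i)$ and $\sum_{i=k}^{m-1}(1/p_i^2)\le s$, so Theorem~\ref{theo:geometric-concentration} applies to $X$ with the stated $s$ and $h=\min\{p_i\mid i=k,\dots,m-1\}$.

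For Part~1 (upper tail), let $T$ denote the first hitting time of $A_m$ started on level~$k$. I would argue that $T$ is stochastically dominated by $X$. Because $p_i$ is a \emph{lower} bound on the probability of leaving level~$i$ upwards in a single step, and this bound holds independently of the history, the number of steps the algorithm spends on level~$i$ (given that it ever enters level~$i$) is stochastically at most $G_i$; and if a level is skipped, its contribution is $0$, which is only smaller. Summing the per-level contributions and using the independence across levels supplied by the hypothesis, I obtain $T\preceq X$. Hence $\Prob(T>\sum_{i=k}^{m-1}(1/p_i)+\delta)\le\Prob(X>\E{X}+\delta)$, and the upper-tail half of Theorem~\ref{theo:geometric-concentration} yields the claimed bound $e^{-(\delta/4)\min\{\delta/s,\,h\}}$.

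For Part~2 (lower tail) the domination runs the other way. Now $p_i$ is an \emph{upper} bound on the probability of advancing from level~$i$ to level~$i+1$, so the waiting time on level~$i$ stochastically dominates $G_i$ from above; and since the algorithm cannot increase its level by more than~$1$, every level $k,\dots,m-1$ must actually be traversed, so all these waiting times genuinely contribute to~$T$. This gives $T\succeq X$, whence $\Prob(T<\sum_{i=k}^{m-1}(1/p_i)-\delta)\le\Prob(X<\E{X}-\delta)$, and the lower-tail half of Theorem~\ref{theo:geometric-concentration} gives $e^{-\delta^2/(2s)}$.

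The main obstacle I anticipate is making the coupling fully rigorous, in particular establishing the independence of the dominating geometric variables $G_i$ even though the true waiting times live on a single dependent trajectory of $\mathcal{A}$. The ``independently of previous steps'' hypothesis is exactly what makes this work: it lets me define, level by level, a coupled sequence of independent geometric clocks that upper-bound (Part~1) or lower-bound (Part~2) the real sojourn times regardless of how the algorithm arrived on that level. The skipping phenomenon in Part~1 is handled by noting that a skipped level contributes zero time, which only strengthens $T\preceq X$; in Part~2 the no-skipping assumption removes this complication entirely and is essential, since a skipped level would otherwise destroy the lower bound.
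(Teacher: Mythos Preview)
Your proposal is correct and follows essentially the same approach as the paper: bound the per-level sojourn times from above (resp.\ below) by independent geometric variables with parameters~$p_i$ and invoke Theorem~\ref{theo:geometric-concentration}. The paper's own proof is in fact just a two-sentence sketch of exactly this idea, so your write-up is a more detailed (and more carefully justified) version of the same argument.
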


\begin{proof}
By definition, the algorithm cannot go down on fitness levels. 
Estimate the time to leave level~$i$ (from above resp.\ from below) by a geometrically distributed random variable with parameter~$p_i$ 
and apply Theorem~\ref{theo:geometric-concentration}.
\end{proof}

\section{Application to RLS on \OneMax}

\label{sec:rls}

We apply Theorem~\ref{theo:fitnesstail} to a classical benchmark problem in 
the analysis of randomized search heuristics, more precisely 
the running time of RLS on \OneMax. 
RLS is a well-studied randomized search heuristic, defined in 
Algorithm~\ref{alg:rls}. The function $\OneMax\colon\{0,1\}^n\to\R$ is defined by 
$\OneMax(x_1,\dots,x_n)=x_1+\dots+x_n$, and the running time 
is understood as the first hitting time of the all-ones string (plus~$1$ 
to count the initialization step).

\begin{algorithm}
\begin{algorithmic}
\STATE $t:=0$.
 \STATE choose an initial bit string $x_0 \in \{0,1\}^n$ uniformly at random.
 \STATE\textbf{repeat} 
  \STATE \quad create $x'$ by flipping a uniformly chosen bit in $x_t$.
  \STATE \quad $x_{t+1}:=x'$ if $f(x') \ge f(x_t)$, and $x_{t+1}:=x_t$ otherwise. 
  \STATE \quad $t:=t+1$.
 \STATE\textbf{forever.}
\end{algorithmic}
\mbox{}\\[-\smallskipamount]
\caption{RLS for the maximization of $f\colon\{0,1\}^n\to\R$}
\label{alg:rls}
\end{algorithm}

\begin{theorem}
\label{theo:rls-onemax}
Let $T$ be the running time of RLS on \OneMax. Then
\begin{enumerate}
\item
$n\ln n - 0.11594 n - o(n) \le E(T) \le  n\ln n - 0.11593n + o(n)$.
\item 
$\Prob(T\le E(T) - rn) \le e^{-\frac{3r^2}{\pi^2}}$ for any $r>0$.
\item
$\Prob(T\ge E(T) + rn) \le 
\begin{cases}
e^{-\frac{3r^2}{2\pi^2}} & \text{if $0<r\le \frac{\pi^2}{6}$}\\
e^{-\frac{r}{4}} & \text{otherwise}.
\end{cases}$.
\end{enumerate}
\end{theorem}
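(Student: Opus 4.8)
The plan is to specialize the fitness-level theorems to RLS on \OneMax and then to carry out a small number of elementary estimates. First I would read off the level probabilities: if the current search point has exactly $j$ zero-bits, RLS makes progress exactly when the single flipped bit is one of these $j$ bits, so the improvement probability is $p_j=j/n$; since one flip changes the fitness by at most one, no level can ever be skipped and $p_j$ is simultaneously an exact upper and lower bound on the improvement probability, so both parts of Theorem~\ref{theo:fitnesstail} apply. Conditioning on the number $Z_0$ of zeros in the random initial string, the running time is $T=1+\sum_{j=1}^{Z_0}G_j$ with independent $G_j$ geometrically distributed with parameter $j/n$. The two parameters entering the tail bounds are then $s=n^2\sum_{j=1}^{Z_0}1/j^2\le n^2\pi^2/6$ (a bound valid for every value of $Z_0$) and $h=\min_j p_j = 1/n$.

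For the expectation in part~1 I would use $\E{T\mid Z_0}=1+nH_{Z_0}$, where $H_m$ denotes the $m$-th harmonic number, and then average over $Z_0$, which is binomially distributed with parameters $n$ and $1/2$. Inserting $H_m=\ln m+\gamma+\bigO{1/m}$ and a second-order Taylor expansion of the logarithm, together with the concentration of $Z_0$ around $n/2$, gives $\E{\ln Z_0}=\ln(n/2)-\bigO{1/n}$ and hence $\E{T}=n\ln n+(\gamma-\ln 2)n\pm o(n)$. Because $\gamma-\ln 2=-0.115931\dots$ lies strictly between $-0.11594$ and $-0.11593$, the explicit interval of part~1 follows once the lower-order terms are bounded.

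For parts~2 and~3 I would substitute $s\le n^2\pi^2/6$, $h=1/n$ and $\delta=rn$ into Theorem~\ref{theo:geometric-concentration}. Monotonicity in $s$ turns the lower tail into $e^{-\delta^2/(2s)}\le e^{-3r^2/\pi^2}$, and the upper tail into $e^{-(\delta/4)\min\{\delta/s,\,h\}}\le e^{-(r/4)\min\{6r/\pi^2,\,1\}}$, whose two regimes meet exactly at $r=\pi^2/6$ as claimed. The hard part, in my view, is that Theorem~\ref{theo:geometric-concentration} controls deviations from the \emph{conditional} mean $1+nH_{Z_0}$, whereas the statement concerns deviations from the \emph{unconditional} mean $\E{T}$. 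These differ by $n(H_{Z_0}-\E{H_{Z_0}})=\Theta(Z_0-n/2)$, which a Chernoff bound confines to $o(n)$ outside an event of probability $e^{-\Omega(\log^2 n)}$. I would therefore argue conditionally on a typical $Z_0$, absorb this $o(n)$ displacement into the linear deviation $rn$ (so that it affects neither leading constant), and dispose of the atypical initialization by a separate crude estimate. Keeping track of this reconciliation, rather than the mechanical application of the tail inequalities, is where the real care is needed.
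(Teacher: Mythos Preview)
Your approach is essentially identical to the paper's: the same fitness levels, the same bounds $s\le n^2\pi^2/6$ and $h=1/n$, the same substitution $\delta=rn$ into Theorem~\ref{theo:fitnesstail}, and the same Chernoff argument on the initial level to extract the constant $\gamma-\ln 2\approx -0.115931$ in $E(T)$. You are in fact more scrupulous than the paper on the one point you flag as hard: the paper simply plugs in and writes $\Prob(T\le E(T)-rn)\le e^{-3r^2/\pi^2}$ without ever commenting on the gap between the conditional mean $nH_{Z_0}$ (which is what the fitness-level theorem actually controls deviations from) and the unconditional $E(T)$.
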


\begin{proof}
We start with  Statement~1, \ie, the bounds 
 on the expected running time. Let the fitness levels $A_0,\dots,A_n$ be defined 
by  
$A_i=\{x\in\{0,1\}^n\mid \OneMax(x)=i\}$ for $0\le i\le n$. By definition of RLS, the 
probability~$p_i$ of leaving level~$i$ equals $p_i=n/(n-i)$ for $0\le i\le n-1$. 
Therefore, the expected running time from starting level~$k$ is
\[
\sum_{i=k}^{n-1} \frac{n}{n-i} \;=\; n\sum_{i=1}^{n-k} \frac{1}{i},
\]
which leads to the weak upper bound $E(T)\le n\ln n + n$ in the first place. 
Due to the uniform initialization in RLS, Chernoff bounds yield
$\Prob(n/2-n^{2/3} \le k \le n/2+n^{2/3}) = 1-e^{-\Omega(n^{1/3})}$. We obtain 
\[
E(T) \;\le\; n\left(\sum_{i=1}^{n/2+n^{2/3}} \frac{1}{i}\right) + e^{-\Omega(n^{1/3})} \cdot (n\ln n+n) \;=\; 
n\left(\sum_{i=1}^{n/2+n^{2/3}} \frac{1}{i}\right) + o(n). 
\]
We can now estimate the Harmonic number by $\ln(n/2+n^{2/3})+\gamma + o(1) = \ln n + \gamma - \ln 2 + o(1)$, 
where $\gamma=0.57721\dots$ is the Euler-Mascheroni constant. Plugging in numerical values for $\gamma-\ln 2$ proves 
the upper bound on $E(T)$. The lower one is proven symmetrically.

For Statement~2, the lower tail bound, we use Theorem~\ref{theo:fitnesstail}. Now, 
$\sum_{i=1}^{n-k} \frac{1}{p_i^2 } \le \sum_{i=1}^{n} \frac{n^2}{i^2} \le \frac{n^2 \pi^2}{6} =: s$. Plugging $\delta:=rn$ 
and~$s$ in the second part of the theorem 
yields $\Prob(T\le E(T) -rn) \le e^{-\frac{r^2 n^2}{2s}} = 
e^{-\frac{3r^2}{\pi^2}} $.

For Statement~3, the upper tail bound, we argue similarly but have to determine when  $\frac{\delta}{s} \le h$. 
Note that $h=\min\{p_i\} = 1/n$. Hence, it suffices to determine when 
$\frac{6rn}{n^2 \pi ^2} \le 1/n$, which is equivalent to $r\le \frac{\pi^2}{6}$. Now the 
two cases of the lower bound follow by appropriately plugging $\frac{\delta}{s}$ or $h$ 
in the first part of Theorem~\ref{theo:fitnesstail}. 
\end{proof}

The stochastic process induced by RLS on \OneMax equals the classical and well-studied coupon collector 
problem (started with~$k$ full bins). Despite this fact, the lower tail bound 
from Theorem~\ref{theo:rls-onemax} 
could not be found in the literature (see also the comment introducing 
Theorem~1.24 in \citealp{DoerrTools}, which describes a simple but weaker 
lower tail). There is an easy-to-prove
 upper tail bound for the coupon collector of the kind 
$\Prob(T\ge E(T) + rn) \le e^{-r}$, which is stronger than our result 
but not obvious to generalize. Finally, \citet[][Theorem~3.38]{Scheideler2000} 
suggests upper and lower tail bounds for sums of geometrically distributed random variables, which 
could also be tried out in our example; however, it then turns out 
that these bounds are only useful if $r=\Omega(\sqrt{\ln n})$. 


\section{Conclusions}
We have supplemented upper and lower tail bounds to the fitness-level method. The lower tails are 
novel contributions and the upper tails improve an existing result from the literature significantly. 
As a proof of concept, we have applied the fitness levels with tail bounds to the analysis 
of RLS on \OneMax and obtained a very sharp concentration result. 

If the stochastic process under consideration is allowed to skip fitness levels, 
which is often the case with globally searching algorithms such 
as evolutionary algorithms, 
our upper tail bound may become arbitrarily loose and the lower tail is even 
unusable. To prove tail bounds in such cases, drift analysis may be used, which 
is another powerful and in fact somewhat related 
method for the running time analysis of randomized search heuristics. See, \eg, 
\cite{LehreWittarXiv13} and references therein for further reading.

\paragraph*{Acknowledgement.}
The author thanks Per Kristian Lehre for useful discussions.

\bibliographystyle{chicago}
\bibliography{fitness-levels-tails}

\begin{thebibliography}{}

\bibitem[\protect\citeauthoryear{Auger and Doerr}{Auger and
  Doerr}{2011}]{AugerDoerrBook}
Auger, A. and B.~Doerr (Eds.) (2011).
\newblock {\em Theory of Randomized Search Heuristics: Foundations and Recent
  Developments}.
\newblock World Scientific Publishing.

\bibitem[\protect\citeauthoryear{Doerr}{Doerr}{2011}]{DoerrTools}
Doerr, B. (2011).
\newblock Analyzing randomized search heuristics: Tools from probability
  theory.
\newblock In A.~Auger and B.~Doerr (Eds.), {\em Theory of Randomized Search
  Heuristics: Foundations and Recent Developments}, Chapter~1. World Scientific
  Publishing.

\bibitem[\protect\citeauthoryear{Jansen}{Jansen}{2013}]{JansenBook}
Jansen, T. (2013).
\newblock {\em Analyzing Evolutionary Algorithms - The Computer Science
  Perspective}.
\newblock Natural Computing Series. Springer.

\bibitem[\protect\citeauthoryear{Lehre and Witt}{Lehre and
  Witt}{2013}]{LehreWittarXiv13}
Lehre, P.~K. and C.~Witt (2013).
\newblock General drift analysis with tail bounds.
\newblock Technical report, arXiv:1307.2559.
\newblock \url{http://arxiv.org/abs/1307.2559}.

\bibitem[\protect\citeauthoryear{{Miscellaneous authors}}{{Miscellaneous
  authors}}{2011}]{Whatisatight}
{Miscellaneous authors} (2011).
\newblock What is a tight lower bound on the coupon collector time.
\newblock
  \url{http://stats.stackexchange.com/questions/7774/what-is-a-tight-lower-bound-on-the-coupon-collector-time}.

\bibitem[\protect\citeauthoryear{Motwani and Raghavan}{Motwani and
  Raghavan}{1995}]{MotwaniRaghavan}
Motwani, R. and P.~Raghavan (1995).
\newblock {\em Randomized algorithms}.
\newblock Cambridge University Press.

\bibitem[\protect\citeauthoryear{Neumann and Witt}{Neumann and
  Witt}{2010}]{NeumannWittBook}
Neumann, F. and C.~Witt (2010).
\newblock {\em Bioinspired Computation in Combinatorial
  Optimization~--~Algorithms and Their Computational Complexity}.
\newblock Natural Computing Series. Springer.

\bibitem[\protect\citeauthoryear{Scheideler}{Scheideler}{2000}]{Scheideler2000}
Scheideler, C. (2000).
\newblock {\em Probabilistic Methods for Coordination Problems}, Volume~78 of
  {\em HNI-Verlagsschriftenreihe}.
\newblock University of Paderborn.
\newblock Habilitation thesis. Available at:
  \url{http://www.cs.jhu.edu/%7Escheideler/papers/habil.ps.gz}.

\bibitem[\protect\citeauthoryear{Sudholt}{Sudholt}{2013}]{SudholtTEC13}
Sudholt, D. (2013).
\newblock A new method for lower bounds on the running time of evolutionary
  algorithms.
\newblock {\em IEEE Transactions on Evolutionary Computation\/}~{\em 17\/}(3),
  418--435.

\bibitem[\protect\citeauthoryear{Wegener}{Wegener}{2001}]{WegenerICALP01}
Wegener, I. (2001).
\newblock Theoretical aspects of evolutionary algorithms.
\newblock In {\em Proceedings of the 28th International Colloquium on Automata,
  Languages and Programming (ICALP~2001)}, Volume 2076 of {\em Lecture Notes in
  Computer Science}, pp.\  64--78. Springer.

\bibitem[\protect\citeauthoryear{Zhou, Luo, Lu, and Han}{Zhou
  et~al.}{2012}]{ZLLHProbable}
Zhou, D., D.~Luo, R.~Lu, and Z.~Han (2012).
\newblock The use of tail inequalities on the probable computational time of
  randomized search heuristics.
\newblock {\em Theoretical Computer Science\/}~{\em 436}, 106--117.

\end{thebibliography}
\end{document}